\newtheorem{definition}{Definition}
\newtheorem{prop}{Property}
\newtheorem{proof}{Proof}
\title{A Constraint Programming Approach to Weighted Isomorphic Mapping \\of Fragment-based Shape Signatures}
\author{
Thierry Petit$^1$\and
Randy J. Zauhar$^2$\and
\affiliations
$^1$Department of Mathematics, Physics and Statistics, University of the Sciences in Philadelphia.\\
$^2$Department of Chemistry and Biochemistry, University of the Sciences in Philadelphia.\\
\emails
tpetit@usciences.edu, rzauhar@usciences.edu
}
\begin{document}

\maketitle

\begin{abstract}
Fragment-based shape signature techniques have proven to be powerful tools for computer-aided drug design.
They allow scientists to search for target molecules with some similarity to a known active compound.
They do not require reference to the full underlying chemical structure, which is essential to deal with chemical databases containing millions of compounds.
However, finding the optimal match of a part of the fragmented compound can be time-consuming.
In this paper, we use constraint programming to solve this specific problem. It involves finding a weighted assignment of fragments subject to connectivity constraints. Our experiments demonstrate the practical relevance of our approach and open new perspectives, including generating multiple, diverse solutions. 

Our approach constitutes an original use of a constraint solver in a real time setting, where propagation allows to avoid an enumeration of weighted paths. The model must remain robust to the addition of constraints making some instances not tractable. This particular context requires the use of unusual criteria for the choice of the model: lightweight, standard\footnote{Alorithms provided as standard in the solvers of the literature, allowing an integration of the software in platforms where the programming language is imposed. } propagation algorithms, data structures without prohibitive constant cost. The objective is \underline{\bf not} to design new, complex algorithms to solve difficult instances. 
\end{abstract}

\section{Introduction}
Assessing shape similarity using signatures to identify compounds likely to be active against a given target is a state-of-the-art technique for computer-aided drug design~\cite{zau03,meek06}. 
The fragment-based implementation of Shape Signatures~\cite{zau13} has proven to enhance the selective power dramatically. Its effectiveness lies in the idea of partitioning molecules into fragments based on cycle systems or by custom speciﬁcation provided by the user.  The signatures are probability distributions stored as histograms generated with ray tracing. The method does not require reference to the detailed underlying chemical structure, which is essential for exploring large chemical databases.
The query molecule and the target molecule are both fragmented; chemical bonds (edges) link the parts (nodes). There are no cycles in the fragment-based decompositions~\cite{zau13}. Their comparison is performed by searching for an isomorphic mapping of the two fragmented compounds. The solution should minimize the dissimilarity between associated query and target fragment histograms. This is a weighted \emph{tree} isomorphism problem, much more suited to database search than the general subgraph isomorphism problem.

Despite its advantages, fragment-based signatures are, in practice, restricted to the search of complete query mapping into the target molecule, not just part of the query molecule. This limitation is mainly due to the nature of the data. Databases like PubChem (\url{https://pubchem.ncbi.nlm.nih.gov/}) and ZINC (\url{https://zinc.docking.org/}) contain tens of millions of compounds, which makes brute-force approaches too time-consuming when only part of the query molecule should be mapped on the target. Other interesting features such as finding (a few) maximally diverse solutions for each query-target candidate also require advanced optimization algorithms.

In this article, we formulate the ``partial query problem'' as a constraint network. Using constraint programming allows us to encode both the weighted assignment and connectivity requirements as modular components, making the model suited to further modifications for other end-user needs. This feature also preserves the remarkably easy use of the fragment-based technique. We only require that end-users supply a query structure and a few parameters to control the number and nature of solutions. 

Our study is the first step towards the integration of constraint programming for fragment-based computer-aided drug discovery. We use an open-source, problem-independent solver to investigate whether this technology can be suited to computer-aided drug discovery  objectives. Our experiments on real and simulated data demonstrate the significance of our approach. We show that large query and target molecules can be considered. Our model can generate multiple, diverse solutions.

The paper is organized as follows. We first recall some notions of constraint programming and discuss graph isomorphism solving tools.  In Section~\ref{sec:pb}, 
we formally state the problem and objectives.  
Section~\ref{sec:CP} describes the constraint model in detail. 
Our experiments (Section~\ref{sec:expes}) are split into three parts: real data, simulated larger data, and 
diverse solutions. We then conclude and discuss perspectives. 
\section{Background}
\subsection{Constraint networks}
A \emph{constraint network}~\cite{mon74} (also called \emph{constraint programming model}~\cite{tenyCPAIOR}) is defined over a set $X$ of \emph{domain variables}, a finite subset $D$ of $\mathbb{Z}$ called the \emph{domain union}, and a set of constraints $C$. 
A constraint $c \in C$ is a pair $\{var(c),rel(c)\}$, where $var(c)$, its \emph{scope}, is a subset of $X$, and $rel(c)$ is a relation that restricts the allowed
combinations of simultaneous value assignments for the variables in $var(c)$. 
Each variable $x \in X$ is defined by a domain unary constraint, that holds if and only if $x$ takes its value in $D(x) \subseteq D$. 
During the search for a solution, 
domains are modified by the solver, e.g., through a branch and bound scheme. 
The \emph{search strategy} specifies the branching in terms of variable order and domain cut/assignment policy. 
To avoid future useless branching below the current node in the search tree, each constraint has an associated \emph{propagator}, which dynamically removes domain values that cannot be part of a solution to that constraint. 
Depending on the propagators used, domain reduction of constraints can be more or less effective: the notion of \emph{consistency} characterizes 
propagator effectiveness. A propagator that only keeps domain values that participate in a solution to its constraint achieves \emph{generalized arc-consistency} (GAC).
A solution to a constraint is obtained when the domains of all its variables are singletons.
Any solution to the problem must satisfy all the stated constraints, including domain constraints. Optimization problems are modeled 
through a specific variable that is minimized or maximized, subject to an objective constraint.
\subsection{Graph isomorhism solvers}
A broad literature exists on constraint-based techniques for solving subgraph isomorphism problems~\cite{DBLP:conf/aaai/Regin94,zan2010,DBLP:journals/ai/Solnon10,DBLP:conf/cp/AudemardLMGP14,DBLP:conf/cpaior/ArchibaldDHMP019}. The subgraph isomorphism problem is to determine whether an injective mapping exists from one given graph to another, such that adjacent pairs of vertices are mapped to adjacent pairs of vertices. 
As far as we know, a powerful state-of-the-art dedicated solver for graph isomorphism is the Glasgow Subgraph Solver. It is based on constraint programming~\cite{DBLP:conf/gg/McCreeshP020}.  
Most of the best alternatives are also based on constraint programming~\cite{DBLP:conf/gbrpr/Solnon19},
which motivated us to use this technology. 
However, the problem tackled in this paper is a weighted assignment problem correlated with a subgraph isomorphic constraint on trees plus, eventually, specific constraints for the objective function. As demonstrated in Section~\ref{sec:expes}, considering trees, a central feature of the fragment-based approach, allows us to use a general constraint programming solver while keeping a satisfactory solving process. We may exploit, in the future, its problem-independent modeling features for fast prototyping of further variants of the problem, which is an essential advantage. This problem has some conceptual links with the maximum common connected subgraph problem~\cite{DBLP:conf/aaai/HoffmannMR17}.
\section{Problem description}~\label{sec:pb}
This section formally describes the problem. We consider two molecules decomposed into fragments: the query molecule and the target molecule. An acyclic, undirected graph represents each molecule. We search for an isomorphic mapping of a connected subgraph of the query to the target minimizing a sum of costs.  
A subgraph is, in our context, a graph that may have only part of the nodes and edges of the original query graph.

\subsection{Definitions}
\begin{definition}[Query graph]
The \emph{query} $Q = (N_Q, E_Q)$ is an acyclic, undirected, connected graph with $n_Q = |N_Q|$ nodes. The nodes are fragments represented as all distinct indexes in $\{0, 1, \ldots, n_Q-1\}$. 
\end{definition}
\begin{definition}[Target graph]
The \emph{target} $T = (N_T, E_T)$, an acyclic, undirected, connected graph with $n_T = |N_T|$ nodes, $n_T \geq n_Q$.  Index $0$ is reserved for modeling a ``dummy target". Therefore, the graph has an isolated vertex, $0$, and a connected component of linked fragments $\{1, \ldots, n_T-1\}$.
\end{definition}
In the graphs $T$ and $Q$, between two nodes $i$ and $j$, $(i,j)$ and $(j,i)$ represent the same edge. 
\begin{definition}[Threshold]
 $\delta$ is a positive user-defined integer threshold. This parameter is used to charaterize acceptable associations of fragment pairs. 
\end{definition}
\begin{definition}[Score matrix]
$S$ is a $n_Q \times n_T$ matrix of fragment-pair integer scores, for potential assignments between the query and the target molecule. Scores range in $[1,\mathit{max_S}]$ for target indexes in $\{1, \ldots, n_T-1\}$. The first column is the ``dummy target": all cells equal to 0. 
\end{definition}
\begin{definition}[Number of query fragments]
$\mathit{nlink}$ is the number of query fragments that must be asociated with target fragments in $\{1, \ldots, n_T-1\}$. We must have $0<\mathit{nlink}\leq n_Q$.  
\end{definition}

\begin{figure*}[!h]
\begin{center}
~\\
\scalebox{0.7}{
\begin{tabular}{l@{ }l@{ }l@{ }l@{ }l@{ }l@{ }l@{ }l@{ }l@{ }l@{ }l@{ }l@{ }l@{ }l@{ }l@{ }l@{ }l@{ }l@{ }l@{ }l@{ }l@{ }l@{ }l@{ }l@{ }l@{ }l@{ }l}
\textbf{~~~} 
&\textbf{0}&\textbf{1}&\textbf{2}&\textbf{3}&\textbf{4}&\textbf{5}&\textbf{6}&\textbf{7}&\textbf{8}&\textbf{9}&\textbf{10}&\textbf{11}&\textbf{12}&\textbf{13}&\textbf{14}&\textbf{15}&\textbf{16}&\textbf{17}&\textbf{18}&\textbf{19}&\textbf{20}&\textbf{21}&\textbf{22}&\textbf{23}&\textbf{24}&\textbf{25}\\
 ~~~\textbf{0}&0 & 26 & 45 & 57 & 85 & 15 & 92 & 71 & 74 & 91 & 23 & 81 & 5 & 12 & {\color{blue}\textit{73}} & 15 & 31 & 91 & 78 & 53 & 39 & 82 & 17 & 47 & 17 & 25\\  
 ~~~\textbf{1}&{\color{blue}\textit{0}} & 92 & 30 & 89 & 84 & 70 & 49 & 85 & 44 & 25 & 67 & 68 & 35 & 44 & 75 & 46 & 38 & 26 & 18 & 89 & 18 & 25 & 100 & 14 & 18 & 15\\ 
 ~~~\textbf{2}&0 & 96 & 33 & 65 & 59 & 14 & 79 & 56 & {\color{blue}\textit{13}} & 7 & 17 & 35 & 24 & 92 & 62 & 17 & 13 & 40 & 86 & 89 & 48 & 56 & 95 & 29 & 50 & 96\\ 
 ~~~\textbf{3}&{\color{blue}\textit{0}} & 42 & 42 & 78 & 72 & 86 & 43 & 94 & 72 & 12 & 91 & 43 & 58 & 46 & 68 & 10 & 27 & 39 & 54 & 89 & 45 & 78 & 61 & 80 & 56 & 99\\ 
 ~~~\textbf{4}&0 & 45 & 13 & 63 & 63 & 88 & 8 & 98 & 39 & 26 & 1 & 98 & 22 & 56 & 32 & 77 & 44 & {\color{blue}\textit{9}} & 73 & 53 & 48 & 59 & 25 & 84 & 84 & 15\\ 
 ~~~\textbf{5}&0 & 26 & 43 & 4 & 81 & 36 & 91 & {\color{blue}\textit{4}} & 34 & 22 & 84 & 46 & 27 & 30 & 25 & 4 & 8 & 13 & 28 & 76 & 10 & 43 & 64 & 10 & 91 & 39\\ 
 ~~~\textbf{6}&0 & 57 & {\color{blue}\textit{47}} & 56 & 16 & 61 & 10 & 31 & 74 & 25 & 81 & 27 & 68 & 90 & 3 & 1 & 96 & 79 & 34 & 19 & 74 & 96 & 9 & 100 & 70 & 84\\ 
 ~~~\textbf{7}&{\color{blue}\textit{0}} & 58 & 14 & 64 & 52 & 57 & 96 & 70 & 24 & 64 & 95 & 1 & 19 & 15 & 77 & 94 & 25 & 17 & 61 & 80 & 53 & 75 & 26 & 66 & 79 & 81\\  
 ~~~\textbf{8}&0 & 82 & 90 & 91 & 58 & 51 & 26 & 76 & 5 & 99 & 20 & 86 & 13 & 21 & 44 & 47 & {\color{blue}\textit{11}} & 84 & 2 & 44 & 95 & 75 & 9 & 25 & 24 & 6\\ 
 ~~~\textbf{9}&0 & 56 & 35 & 61 & 17 & 93 & 30 & 80 & 18 & 20 & 59 & 76 & 13 & 38 & 10 & 92 & 41 & 61 & {\color{blue}\textit{1}} & 87 & 66 & 65 & 10 & 51 & 63 & 65\\  
 ~~~\textbf{10}&{\color{blue}\textit{0}} & 12 & 16 & 34 & 88 & 10 & 30 & 78 & 84 & 20 & 99 & 53 & 53 & 67 & 63 & 74 & 6 & 75 & 10 & 54 & 57 & 83 & 93 & 24 & 65 & 89\\ 
 ~~~\textbf{11}&0 & 62 & 3 & 12 & 25 & 97 & {\color{blue}\textit{17}} & 68 & 86 & 72 & 9 & 23 & 32 & 22 & 41 & 80 & 4 & 84 & 30 & 73 & 53 & 42 & 38 & 10 & 72 & 61\\  
 ~~~\textbf{12}&{\color{blue}\textit{0}} & 81 & 44 & 71 & 30 & 28 & 58 & 79 & 50 & 55 & 66 & 76 & 61 & 20 & 21 & 74 & 66 & 21 & 25 & 51 & 29 & 31 & 20 & 53 & 16 & 90\\  
 ~~~\textbf{13}&{\color{blue}\textit{0}} & 9 & 89 & 44 & 70 & 47 & 37 & 20 & 95 & 38 & 94 & 20 & 21 & 42 & 55 & 27 & 70 & 18 & 31 & 59 & 54 & 58 & 37 & 82 & 30 & 43\\  
 ~~~\textbf{14}&{\color{blue}\textit{0}} & 11 & 27 & 81 & 56 & 68 & 43 & 35 & 10 & 12 & 15 & 23 & 64 & 70 & 19 & 60 & 44 & 88 & 67 & 36 & 39 & 43 & 34 & 14 & 45 & 90\\  
 ~~~\textbf{15}&{\color{blue}\textit{0}} & 11 & 38 & 46 & 62 & 61 & 83 & 4 & 32 & 28 & 45 & 68 & 23 & 58 & 9 & 48 & 48 & 45 & 27 & 43 & 8 & 58 & 67 & 77 & 45 & 90\\ 
 ~~~\textbf{16}&0 & 35 & 51 & 93 & {\color{blue}\textit{6}} & 57 & 11 & 80 & 88 & 88 & 49 & 50 & 82 & 61 & 84 & 34 & 99 & 77 & 16 & 17 & 24 & 2 & 30 & 58 & 38 & 88\\  
 ~~~\textbf{17}&{\color{blue}\textit{0}} & 91 & 54 & 33 & 45 & 71 & 57 & 11 & 20 & 84 & 25 & 15 & 75 & 11 & 95 & 71 & 92 & 34 & 8 & 5 & 56 & 40 & 84 & 26 & 60 & 56\\  
 ~~~\textbf{18}&{\color{blue}\textit{0}} & 51 & 94 & 48 & 61 & 86 & 60 & 49 & 12 & 5 & 1 & 21 & 86 & 91 & 60 & 82 & 88 & 62 & 89 & 6 & 57 & 33 & 100 & 59 & 91 & 4\\  
 ~~~\textbf{19}&{\color{blue}\textit{0}} & 47 & 83 & 36 & 20 & 79 & 34 & 92 & 48 & 86 & 79 & 4 & 68 & 97 & 78 & 61 & 84 & 33 & 84 & 55 & 57 & 58 & 82 & 50 & 36 & 57\\	
\end{tabular}
}
\end{center}
\begin{center}
\includegraphics[width=5.2in]{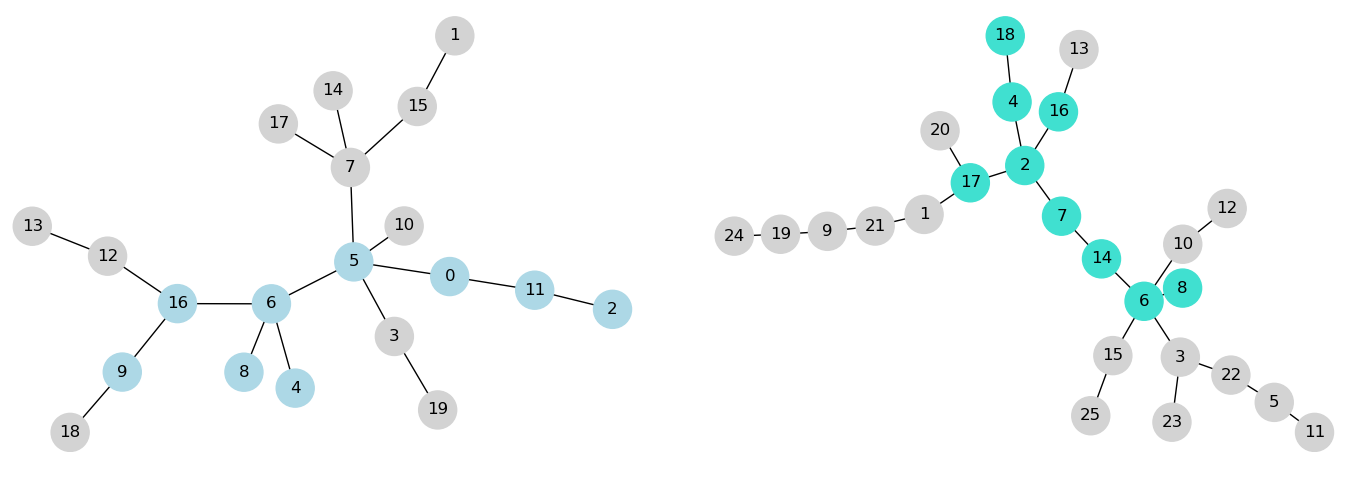}
\caption{On top, a score matrix $S$ of a 20 fragments query and 25 fragments target (this example uses a discretized matrix). We added node 0 to the target to represent a ``dummy " fragment. Below, a query graph $Q$ (on the left) and a target graph $T$ (dummy node 0 is omitted), associated with $S$. The following solution for $\mathit{nlink}=9$ is optimal ($T$ in blue $\rightarrow Q$ in turkoise): $0 \rightarrow 14$; $2 \rightarrow 8$;  $4 \rightarrow 17$; $5 \rightarrow 7$; $6 \rightarrow 2$; $8 \rightarrow 16$; $9 \rightarrow 18$; $11 \rightarrow 6$; $16 \rightarrow 4$;. Its cost is 181 (see the blue values in $S$: 73+13+9+4+47+11+1+17+6). } \label{fig:graphs}
\end{center}
\end{figure*}

\subsection{Feasible and optimal solutions}~\label{sec:solgen}
A feasible solution to the problem is a one-to-one mapping of $\mathit{nlink}$ distinct nodes in the query graph onto $\mathit{nlink}$ distinct nodes in the target graph, such that:
\begin{itemize}
\item The $\mathit{nlink}$ selected nodes in $N_Q$ form a connected component of $Q$.
\item If two selected nodes in $N_Q$ are linked by an edge in $E_Q$ then their two associated nodes in $N_T$ are also linked by an edge in $E_T$.
\item Each pair of associated fragments has a \emph{score} in $S$ that must be strictly less than the threshold $\delta$.
\item The \emph{cost} of the solution is the sum of scores in $S$ of all the selected pairs of query and target nodes.
\end{itemize}
A solution is \emph{optimal} if there exists no feasible solution with a strictly lower cost. 
Figure~\ref{fig:graphs} shows an example of matrix $S$ and an optimal solution for $\mathit{nlink}=9$. When the goal is to generate multiple, diverse solutions, we look for near-optimal cost solutions. 

Beyond a simple mapping between a query and a target, we consider three solution generation procedures. They make sense in the context of molecular databases because they correspond to standard end-users' queries. 
\begin{enumerate}
\item Find, should they exist, all optimal solutions for $\mathit{nlink} = 1$ to $\mathit{nlink} = n_Q$.
\item Given an integer $\mathit{nlink} \in [1, n_Q]$, look for a set of diverse near-optimal  solutions, including the optimal one. Solution 
quality is expressed by a distance to the optimal cost. We discuss in section~\ref{sec:CP} how this distance is computed and the diversity measured.
\item Find, should it exists, an optimal solution for $\mathit{nlink} \in [1, n_Q]$, where $n \leq \mathit{nlink}$ associations have been fixed manually. 
\end{enumerate}
It is straightforward that the third procedure is a special case of the first one. 
\section{Constraint Model}\label{sec:CP}
This section details the constraint programming model we designed for the problem. It can be parameterized for use with any of the solution generation schemes described in Section~\ref{sec:solgen}.
\subsection{Variables}
\begin{itemize}
\item $X_Q$, a set of $n_Q$ variables for the query. For all $x_Q^i \in X_Q$, the assigned value in a solution should be a valid target index, \emph{i.e.}, 
\[D(x_Q^i)= \{j \in \{0, 1, \ldots, n_T\}: S[i][j]<\delta\}.\] 
Recall that value $0$ represents a ``dummy" target, useful to state that some query fragments are not concretely mapped onto the target molecule.  
\item $X_S$, a set of $n_Q$ variables one-to-one mapped with $X_Q$. For all $x_S^i \in X_S$, the domain is the set of possible scores 
in the row $i$ of $S$, \emph{i.e}, 
\[D(x_S^i) =  \{s \in S[i]: s<\delta\}.\]
\item $\mathit{occ_0}$ is a fixed integer variable representing the number of variables in $X_Q$ assigned with value 0 (\emph{i.e.}, the query fragments assigned to the dummy target). 
\[D(\mathit{occ_0}) = \{n_Q-\mathit{nlink}\}.\]
\item $\mathit{obj}$ is the objective variable to be minimized, constrained to be equal to the sum of variables in $X_S$. 
\[D(obj)=[0,\mathit{nlink}\times(\delta-1)].\] 
\end{itemize}
A supplementary variable should be stated for generating multiple, diverse solutions. Its domain and use are described in Section~\ref{sec:div}. 
\subsection{Constraints}\label{sub:constraints}
\subsubsection{Assignment and objective function}
A solution should satisfy three assignment constraints and the objective constraint. 
\begingroup
\renewcommand\labelenumi{(\theenumi)}
\begin{enumerate}
\item If $x_Q^i \in X_Q$ is assigned value $j$, the score variable 
$x_S^i \in X_S$ must be equal to $S[i][j]$. 
\item The variables in $X_Q$ with a strictly positive value should be all different: each query fragment is assigned to a distinct target fragment.
\item The number of variables in $X_Q$ with value $0$ should be equal to $\mathit{occ_0}$.  
\item Variable $obj$ should be equal to the sum of the values assigned to the variables in $X_S$.
\end{enumerate}
\endgroup
Constraint (1) can be encoded through table constraints on all variable pairs $[x_Q^i,x_S^i]$, $i \in [0, n_Q[$, \emph{i.e.}, by explicitly listing the set of allowed combinations of values for each variable pair $(x_Q^i,x_S^i)$ from the score matrix $S$. 
State-of-the-art constraint programming solvers associate table constraints with GAC propagators. 

Constraint (2) is \texttt{AllDifferentExcept0}. This constraint is derived from the well-known \texttt{AllDifferent} constraint~\cite{DBLP:conf/aaai/Regin94} and comes up with a GAC algorithm. 
\begin{definition} {\normalfont\texttt{AllDifferentExcept0}} holds on a variable set $X$. It enforces all the variables in $X$ to take distinct values, except those variables that are assigned value 0.
\end{definition}
In our model, \texttt{AllDifferentExcept0} is enforced on $X_Q$.

Constraint (3) is \texttt{Count}~\cite{sicstus}.
\begin{definition} {\normalfont\texttt{Count}} takes an integer value $v$ as argument and holds on a variable set $X$ and a variable $\mathit{occ}$. It enforces the number of variables in $X$ taking value $v$ to be equal to $\mathit{occ}$. 
\end{definition}
In our model, we use \texttt{Count} with the three arguments 0, $X_Q$ and $\mathit{occ_0}$. 

The objective constraint (4) is a sum constraint: 
\[\mathit{obj} = \sum_{x_S^i \in X_S} x_S^i.\] 
\subsubsection{Isomorphism}
A solution to the problem should satisfy two connectivity constraints.
\begingroup
\renewcommand\labelenumi{(\theenumi)}
\begin{enumerate}
\item If two query fragments are linked by an edge (a chemical bond), the assigned target fragments should also be linked by an edge, except if at least one of the two target fragments is the dummy value $0$. 
\item All the variables in $X_Q$ that take a strictly positive value should belong to the same connected component in the graph $Q$. Observe that enforcing simultaneously this constraint and Constraint (1) guarantees that the target will also be a single connected component, which is required.  
\end{enumerate}
\endgroup
Constraint (1) can be encoded through table constraints on all variable pairs $[x_Q^{i1},x_Q^{i2}]$, $i_1 \in [0, n_Q[$, $i_2 \in [0, n_Q[$, where allowed combinations of values are explicitly set. This table ensures that any two variables $x_Q^{i1} \in X_Q$ and $x_Q^{i2} \in X_Q$ such that $(i_1,i_2) \in E_Q$ either are assigned two values $j_1$ for $x_Q^{i1}$ and $j_2$ for  $x_Q^{i1}$ such that $(j_1, j_2) \in E_T$, or are such that at least one of the two takes value 0. 
%

To encode Constraint (2) we can exploit the property of having $Q$ acyclic and connected. 
\begin{prop}\label{prop1}
There exists exactly one path between any two distinct nodes $i_1$ and $i_2$ in $Q$. 
\end{prop}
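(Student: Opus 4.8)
The statement is the classical fact that a connected acyclic undirected graph (a tree) has a unique path between any two of its vertices, so the plan is to establish existence and uniqueness separately, using only connectivity for the former and acyclicity for the latter.

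For existence, I would argue that since $Q$ is connected there is at least a walk from $i_1$ to $i_2$; among all walks between these two nodes pick one of minimum length. If this minimal walk repeated a vertex, the portion strictly between the two occurrences could be excised to produce a shorter walk, contradicting minimality. Hence the minimal walk is a (simple) path, which gives existence.

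For uniqueness, I would proceed by contradiction: assume $P$ and $P'$ are two distinct paths from $i_1$ to $i_2$. Traverse both from $i_1$; let $u$ be the last vertex at which they still agree before first diverging, and let $w$ be the first vertex after $u$ that again lies on both paths (such a $w$ exists because they both eventually reach $i_2$). Then the sub-path of $P$ from $u$ to $w$ and the sub-path of $P'$ from $u$ to $w$ share only their endpoints $u$ and $w$, and at least one of them has an internal vertex (since the paths diverged at $u$); concatenating one of these sub-paths with the reverse of the other yields a cycle in $Q$, contradicting the hypothesis that $Q$ is acyclic. Therefore $P = P'$.

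The main obstacle is the uniqueness step: one must be careful that naively concatenating $P$ with the reverse of $P'$ need not be a simple cycle, so the argument has to isolate the first point of divergence $u$ and the first subsequent point of re-convergence $w$ in order to extract a genuine simple cycle. Once that combinatorial bookkeeping is set up, the contradiction with acyclicity is immediate, and existence is routine.
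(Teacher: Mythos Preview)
Your proof is correct and follows exactly the same two-part structure as the paper's own proof: existence from connectedness, uniqueness from acyclicity. The paper simply states these two implications in one sentence each without spelling out the minimal-walk and divergence/re-convergence arguments, so your version is just a more detailed rendering of the same approach.
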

\begin{proof} $Q$ has a single connected component so at least one path exists between any two distinct nodes. $Q$ is acyclic so at most one path exists between any two distinct nodes. 
\end{proof}
From property~\ref{prop1}, we can generate the path between any pair of query node variables $(x_Q^{i1} \in X_Q, x_Q^{i2} \in X_Q)$. If $x_Q^{i1}>0$ and $x_Q^{i2}>0$ then all the other variables in the path must take a value strictly greater than $0$. We can state a logical expression that involves the \texttt{Count} constraint to forbid the value 0 for the variables in the path when $x_Q^{i1}>0$ and $x_Q^{i2}>0$. The pseudo-code is the following.  \\\\
\texttt{for i1 in [0,}$n_q$\texttt{-1[:}\\
\indent\texttt{for i2 in [i1+1,}$n_q$\texttt{[:}\\
\indent\indent\texttt{vars = variables of }$X_Q$\texttt{ in the path from i1 to i2 (excluding }$x_Q^{i1}$\texttt{ and }$x_Q^{i2}$\texttt{)}\\ 
\indent\indent\texttt{if vars.length > 0:}\\
\indent\indent\indent\texttt{model.addConstraint((}$x_Q^{i1}$\texttt{ == 0 or }$x_Q^{i2}$\texttt{ == 0) or count(0,vars,0))}
\subsection{Search strategy}\label{sec:strategy}
We use the following search strategy for solving this model. 
\begin{enumerate}
\item Branch first on the variables in $X_Q \cup X_S$ using the default search strategy of the solver, ``domOverWdeg"~\cite{DBLP:conf/ecai/BoussemartHLS04}, and the minimum value in each domain, enhanced with a last-conflict reasoning~\cite{DBLP:journals/ai/LecoutreSTV09}.
\item Assign $\mathit{occ}_0$ and $\mathit{obj}$. 
\end{enumerate}
\subsection{Solution diversity}\label{sec:div}
To generate $k$ maximally-diverse solutions to an instance that involves $n$ variables, we state a constraint that represents some function:
\[ f:(\mathbb{Z}^n)^k\mapsto~\mathbb{Z}_+.\]
To encode this constraint, we use the GAC propagator suggested by Hebrard et al.~\cite{heb07b}, based on the Hamming Distance. The definition below accepts Hamming as well as other distance measures~\cite{pettra19}. It implicitly assumes that solution diversity is likely to be maximized. 
\newcommand{\diver}{\mathit{y_d}}
\newcommand{\prev}{\mathit{c}}
\begin{definition}\label{def:DiverseSum}
Let $\prev \geq 0$ be a constant, $X = \{x_0, \ldots, x_{n-1}\}$ be a set of variables, $\diver$ a variable, ${\mathcal A}$ a set of previous $k-1$ assignments of $X$ and $\delta: (\mathbb{Z}^n)^2\mapsto~\mathbb{Z}_+$ be a distance measure, such as the Hamming distance, 
based on pairwise comparisons of the values taken by each variable in $X$, that is, for any solution $A_j \in {\mathcal A}$,
\begin{equation}\label{eq:diverse_sum}
\delta(X, A_j) = \sum_{i = 0}^{n-1} \delta^x(A_j[i],x_i).
\end{equation}

{\normalfont\texttt{Diversity}}$(X, \diver, {\mathcal A},\delta,\prev)$ is satisfied if and only if:
\begin{equation}\label{eq:diverse_sum_iff}
\diver \leq \sum_{j=1}^{|{\mathcal A}|} \delta(X, A_j) + \prev.
\end{equation}
\end{definition}
The pseudo code for generating up to $k$ diverse solutions at a maximum distance of $\mathit{gap}$ to the optimal objective value is the following: \\\\
\texttt{model} $=$ \texttt{new model}\\
\texttt{$\mathit{sol} =$  solution that minimizes $\mathit{obj}$, if it exists}\\
\texttt{if $\mathit{sol} \neq \emptyset$:}\\
\indent\texttt{$\mathcal{A}$ = $k \times n_q $ empty matrix} \\
\indent\texttt{$\mathcal O$ = list of length $k$}\\
\indent\texttt{$\mathcal{A}$[0] = $X_Q$ assignment in $\mathit{sol}$}\\
\indent\texttt{$\mathcal{O}$[0] = value assigned to $\mathit{obj}$ in $\mathit{sol}$}\\
\indent\texttt{$\prev$ = 0} \\
\indent\texttt{i = 1} \\
\indent\texttt{while i$<k$ and $\mathcal{A}$[i-1] is not empty:}\\
\indent\indent\indent\texttt{model = new model}\\
\indent\indent\indent\texttt{$D(\diver)$ = [0, $n_Q + \prev$]} \\
\indent\indent\indent\texttt{$D(\mathit{obj})$ = [0, $\mathcal{O}$[0] + $\mathit{gap}$]} \\ 
\indent\indent\indent\texttt{model.addConstraint(\texttt{Diversity}}$(X_Q, \diver, {\mathcal A},\delta,\prev))$\\
\indent\indent\indent\texttt{$\mathit{sol} =$  solution that maximizes $\diver$, if it exists}\\
\indent\indent\indent\texttt{if $\mathit{sol} \neq \emptyset$:}\\
\indent\indent\indent\indent\texttt{$\mathcal{A}$[i] = $X_Q$ assignment in $\mathit{sol}$}\\
\indent\indent\indent\indent\texttt{$\mathcal{O}$[i] = value assigned to $\mathit{obj}$ in $\mathit{sol}$} \\  
\indent\indent\indent\indent\texttt{$\prev$ = $\prev$ + value of $\diver$ for $\mathcal{A}$[i]}\\
\indent\indent\indent\texttt{i = i+1}\\

The diversity variable $\diver$ must be considered in the search strategy (assigned after $\mathit{occ}$ and $\mathit{obj}$ in the second step of Section~\ref{sec:strategy}). 

\section{Experiments}\label{sec:expes}
We used a Microsoft Windows 10 computer with a I5-8350U processor and 16GB of RAM, using the 
Choco 4 Java contraint programming solver (see: \url{https://choco-solver.org/}). The main objective of these experiments is determining   
whether constraint programming technology can be used in the context of querying large molecular databases using fragment-based signatures.  
Further integration of this model might require implementing the model with a different programming language.\footnote{A general purpose language, like Java, C$++$, etc., i.e., not dedicated to constraint programming or to a specific optimization tool.}

\subsection{Real data}\label{sec:real}
We considered $n = 76$ molecules from the Zinc database (\url{https://zinc.docking.org/}), each fragmented with less than 8 fragments.  
\begin{figure*}[!h]
\begin{center}
\includegraphics[width=1.5in]{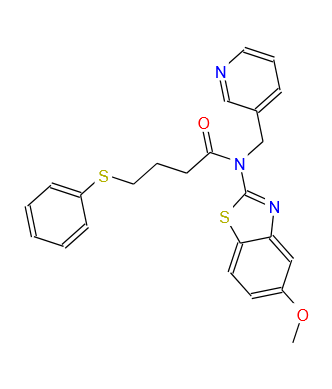}\\
\includegraphics[width=3.6in]{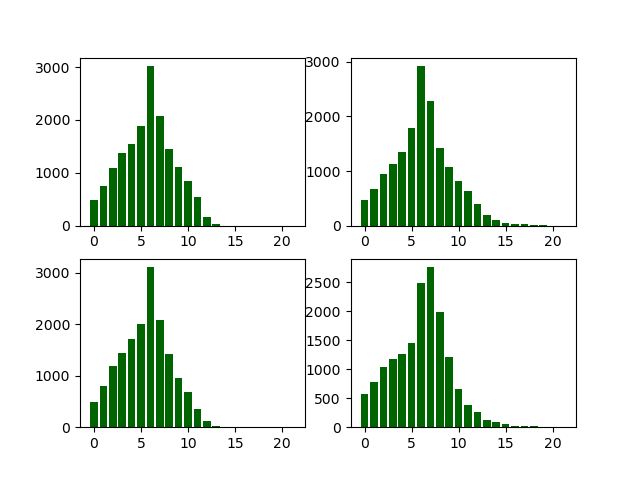}
\caption{Shapes of the four individual fragments of \texttt{ZINCID:10020366}. } \label{fig:shapes}
\end{center}
\end{figure*}
All individual fragments were ray traced, as well as predefined fusions (fixed connected fragment sets), which are taken into account in the current software to search for the best matches. 
The use of fusions is out of the scope of this article. The resulting shapes are represented as 32-bin histograms, with values ranging from 0 to a few thousand units. Figure~\ref{fig:shapes} shows the example of \texttt{ZINCID:10020366}.

\begin{figure*}[!h]
\begin{center}
\includegraphics[width=4in]{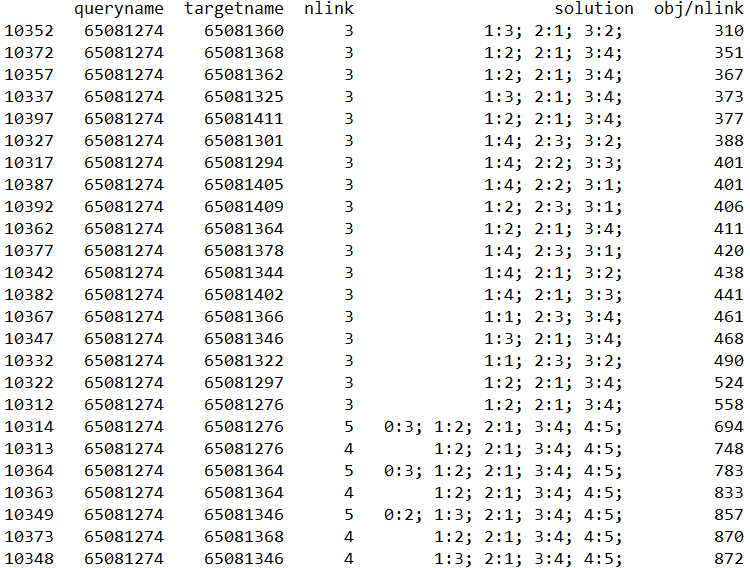}~\\~
\includegraphics[width=1.2in]{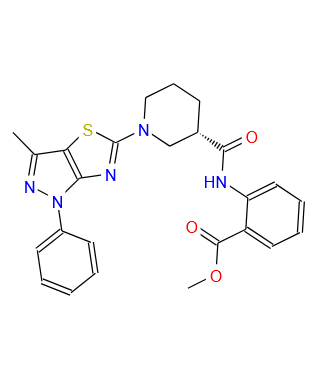}
\includegraphics[width=1.2in]{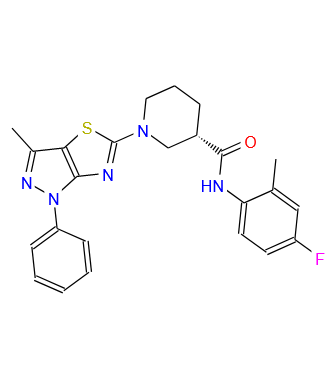}
\includegraphics[width=1.2in]{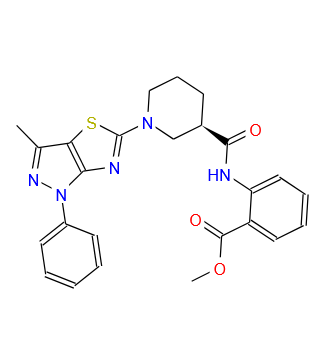}
\includegraphics[width=1.2in]{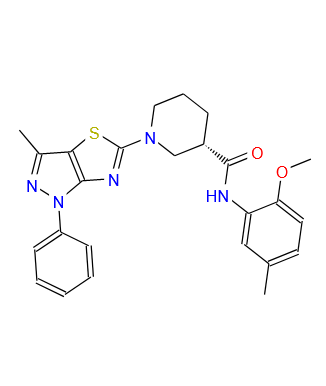}
\caption{On top, a sample of the output for a specific query (\texttt{ZINCID:65081274}),  with 25 optimal solutions such that $\mathit{nlink}$ is between 3 and 5 fragments, sorted by increasig value of $\mathit{obj}/\mathit{nlink}$. Below, from left to right: A drawing of the query \texttt{ZINCID:65081274}, decomposed into $n_Q=5$ fragments,  the target with minimum $\mathit{obj}/\mathit{nlink}$ value among all optimal solutions for $\mathit{nlink}=3$ (\texttt{ZINCID:65081360}), and two first targets using the comparator $\mathit{obj}/\mathit{nlink}$ when $\mathit{nlink}=n_Q=5$ (\texttt{ZINCID:65081276} and \texttt{ZINCID:65081364}). } \label{fig:zincexample}
\end{center}
\end{figure*}
\subsubsection{Data processing}
To compare two histograms we use the Manhattan distance. This metric is fast and was proved relevant to compound search (see~\cite{zau13}). Let $h_i^k$ denote the height of bin $k$ for the histogram $i$, each score is computed as follows:
\[S[i][j] = \sum_{\mathit{k=0}}^{\mathit{k = 31}} |h_i^k-h_j^k|.\] 

We generated the 2840 score matrices of molecules pairs. For generating the solutions, we considered for each pair that the query is the smaller of the two molecules in terms of number of fragments (we took the oder in the datafile in case of equality). The generation process takes less than one second. 

\subsubsection{Results}
For each of the 2840 instances, our model found the optimal solutions with $\mathit{nlink}$ ranging from 1  
to the number of fragments of the query, i.e., $\mathit{nlink}=n_Q$, should a solution exists.

\begin{table}[!h]
\begin{center}
\begin{tabular}{|c|c|c|} 
\hline
&  \# Bactracks & Time (s) \\
 \hline
Mean &  7.19  &   0\\
Standard deviation & 5.8 &    0\\
Min  &  0& 0 \\
Median &7& 0\\
Max & 53 & 0.02\\
 \hline
\end{tabular}
\end{center}
\caption{Number of backtracks and time for proving optimal solution or absence of solution on 12269 problems extracted from the ZINC database. A total of 11706 instances had a solution. All instances were proved optimal in less than 53 backtracks.}\label{tabledata}
\end{table}
 
A total number of 
12269 problems were solved, in which 11706 had a solution. Figure~\ref{fig:zincexample} shows an example of output. Table~\ref{tabledata} shows that the optimal solutions are found in almost real-time on this dataset, making the approach suited to large database searches.  

\subsubsection{Discussion}

We generated all the score matrices before solving the instances and saved them to files, which were then read to populate the variable domains. Although the size of the query and target allowed this calculation and file reading to be performed in nearly constant time for each pair of molecules, we could not reasonably perform this for $n$ equal to ten million molecules.  We would have had to generate $\frac{n\times(n-1)}{2}$ files. This can be seen as a weakness regarding the idea of using an optimization approach rather than enumerating some fragment or fusion mappings and computing their scores. However, we can observe that the goal is to find targets for a specific subset of queries, possibly only one. If there are only 100 queries, precomputing matrices for millions of potential targets is feasible with appropriate hardware. Otherwise, one may consider creating them before each solve or performing basic data filtering before querying the database, e.g., 
restrict to specific sets of target molecules.   

In addition, it is a good idea to run the solver with a single query against multiple targets at once. While this idea requires further study, we believe that avoiding the cost of one solver run per instance should be beneficial in a real-time setting.  
\subsection{Simluated data}
Although the typical fragment size is between 3 and 10 in the data considered at this time in computer-assisted drug discovery projects, it is of interest to investigate for future uses whether the model can be used with larger queries and targets. More importantly, the goal of this experiment is to determine whether, in datasets where molecules are randomly generated (with respect to size and number of branches in fragment trees), one can use a Branch and Bound search to map a portion of the query fragments on the fly, given connectivity and dissimilarity constraints. The response time should be close to real time. Unlike most application papers, simulated data is at least as important as real data in our context. 

We generated random acyclic graphs and score matrices. We created 100 sets of instances with a query ranging from 5 to 25 nodes on a target of 50 nodes, with score values in [1,100] for the matrix $S$, and $\delta > 100$. For each instance, we ran the model to search for the optimal mapping with $\mathit{nlink}$ ranging from 1 to the query size. The total number of problems solved was 7500. All optimal solutions were found in case a solution exists. Otherwise, the solver proved that there was no solution. In total, 6855 instances had a solution. 
\begin{table}[!h]
\begin{center}
\begin{tabular}{|c|c|c|} 
\hline
&  \# Bactracks & Time (s) \\
 \hline
Mean &  1312.06  &   0.19\\
Standard deviation & 2058.30 &    0.32\\
Min  &  0& 0 \\
Median &485& 0.05\\
Max & 30696 & 3.46\\
 \hline
\end{tabular}
\end{center}
\caption{Number of backtracks and time for proving optimality or absence of solution on 7500 problems with a query of 5 to 25 nodes, a target of 50 nodes, and $\mathit{nlink}$ ranging from 1 to the query size. 6855 instances had a solution. 7066 instances were solved in less than 1000 backtracks.}\label{table1}
\end{table}

\begin{figure*}[!h]
\begin{center}
\includegraphics[width=6in]{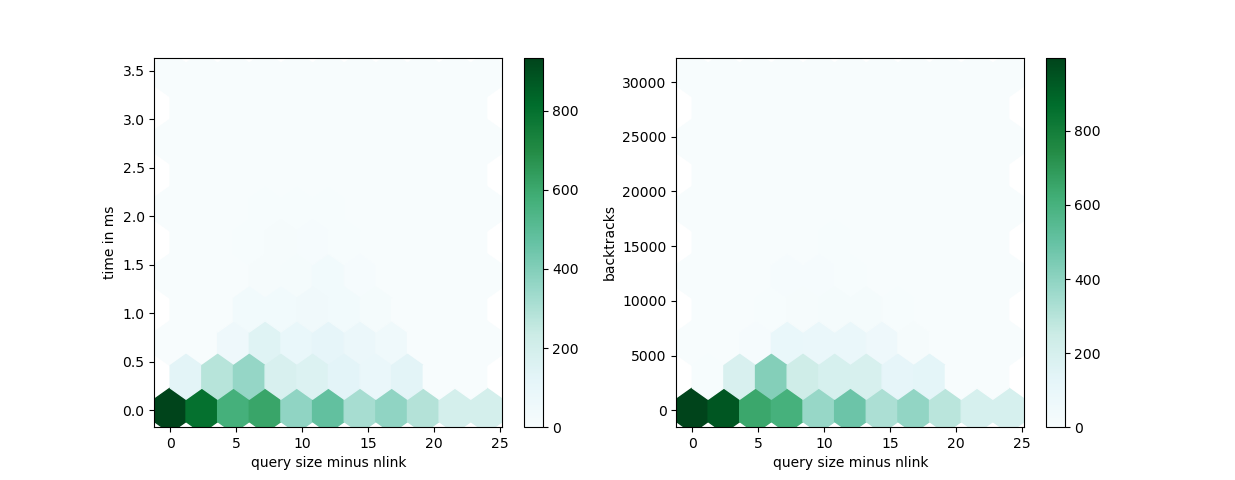}
\caption{Correlation between the difference $n_Q - \mathit{nlink}$ and, respectively, solving time and number of backtracks. } \label{fig:hexa}
\end{center}
\end{figure*}

Table~\ref{table1} and Figure~\ref{fig:hexa} show that the 7500 instances were solved within a time less than 3.46 seconds each, most in less than 0.5 seconds. We observed a few instances that were more difficult than the others, one of them requiring 30696 backtracks and 3.46 seconds to solve, which can be considered an extreme acceptable time for querying databases of millions of compounds, although a constant factor can be gained by using more powerful computers. 

\begin{table*}[!t]
\begin{center}
\begin{tabular}{|c|c|c|c|c|c|c|c|c|} 
\hline
Query & Target &  Score & \# Inst. & \# Inst. & \# Backtr. & \# Backtr. & Time & Time \\
size & size & range & & with a & Mean & Max & Mean & Max \\
& & & & solution & & & (s) & (s) \\
 \hline
5-25 & 30 & [0,100] & 7500 & 6373 & 720 & 10101 & 0.07 & 0.92 \\
5-25 & 70 & [0,100] & 7500 & 7004 & 1872 & 24997 & 0.44 & 5.56 \\
5-25 & 30 & [0,10000] & 7500 &  6393 & 755 & 10125  & 0.09 & 1.25  \\
5-25 & 50 & [0,10000] & 7500 & 6880 & 1395 & 23868  & 0.26 & 3.73 \\
5-25 & 70 & [0,10000] & 7500 & 7019  & 1940  & 30686  & 0.59   & 8.6  \\
\hline
\end{tabular}
\end{center}
\caption{Number of backtracks (mean and max) and time (mean and max) for proving optimal solution or absence of solution on problems with a query varying from 5 to 25 nodes and $\mathit{nlink}$ ranging from 1 to the query size.}\label{table1b}
\end{table*}

The results are very promising. Real-world problems will rarely involve queries of 25 fragments and targets as large as 50 fragments.  
Figure~\ref{fig:hexa} shows minor correlation between the difference $p = n_Q - \mathit{nlink}$ and the solving process. The instances were slightly harder to solve with $p \in [5,20]$.  
We did not observe significant correlation between the existence of a solution and solving time.  

We reproduced the same experiment with two different target sizes, respectively 30 fragments and 70 fragments, and the same experiment with scores ranging in [0,10000] and 
$\delta>10000$, with respectively 30, 50 and 70 fragments for the target. The results are summarized in Table~\ref{table1b}.  They are quite similar to the results of the original experiment.
\subsection{Multiple, diverse solutions}
We considered 1000 instances with targets of 50 nodes, query graphs of 15 nodes and $\mathit{nlink}=10$.  For each one, we generated the optimal solution plus 4 maximally diverse solutions with a maximal increase of 10\% on the objective value compared with the optimal.
\begin{table}[!h]
\begin{center}
\begin{tabular}{|c|c|c|} 
\hline
&  \# Bactracks & Time (s) \\
 \hline
Mean &  997.98  &   0.1\\
Standard deviation & 602.95 &    0.05\\
Min  &  6& 0 \\
Median &927& 0.1\\
Max & 4342 & 0.4\\
 \hline
\end{tabular}
\end{center}
\caption{Number of backtracks and time for finding the optimal solution and 4 maximally diverse solutions with $n_Q = 15, n_t=50$, and $\mathit{nlink} = 10$. The data statistics were computed on 1000 instances (5000 problems solved).}\label{table2div}
\end{table}

Generating maximally diverse solutions did not induce more backtracks or computational time compared with the first optimal solution (see Table~\ref{table2div}). In this benchmark, all the instances had solutions. We obtained similar results with other values for $\mathit{nlink}$ and $n_Q$. 

\begin{figure}[!h]
\begin{center}
\includegraphics[width=2.8in]{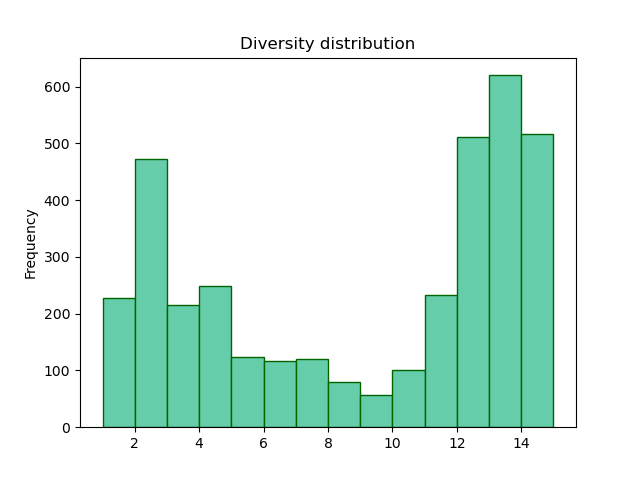}
\caption{Diversity value distribution of solutions generated after finding the optimal. } \label{fig:div}
\end{center}
\end{figure}

As we limited the increase in $obj$ to 10\%, it may occur that after a certain number of iterations on a given instance, the next solution found does not differ or only differs marginally from one of the previously generated solutions. However, the essential need is, in practice, to have at least one significantly diverse alternative to the optimal solution. We measured the distribution of Hamming distances in the second generated solution, just after the optimal one (see Figure~\ref{fig:div}). We observe that more than half of the solutions have eight or more variables, i.e., more than $n_Q/2$, that differ from the optimal solution. 

\section{Conclusion}
We have designed a constraint programming model to optimally solve variants of the fragment-based shape signature problem. The experimental results on real and simulated data are convincing.  The prototyping phase having fully satisfied our objectives, the next step will consist in integrating this component based on constraint programming to the existing tool. This step will essentially require an analysis of the hardware, software and data processing aspects, the model being adapted as it has been designed. 

This study opens new perspectives, such as the generation of multiple, diverse solutions and the consideration of larger molecules. A significant advantage of this approach is the simplicity of the model. This simplicity will likely allow us to adapt it in the future to other connectivity constraints, related to a particular application or to other recent techniques that use fragment-based shape signatures.
\bibliographystyle{named}
\bibliography{arxiv_tpetit}
\end{document}